\documentclass[twoside]{article}
\usepackage[accepted]{aistats2016}

\usepackage{natbib}
\usepackage{amsthm}
\usepackage{amsmath}
\usepackage{amssymb}
\usepackage{algorithm}
\usepackage{algorithmic}
\usepackage{dsfont}
\usepackage{graphicx}
\usepackage{caption}
\usepackage{subcaption}
\usepackage{xspace}
\usepackage{bold-extra}
\usepackage{url}

\newtheorem{lemma}{Lemma}
\newtheorem{theorem}{Theorem}
\newtheorem{definition}{Definition}

\newcommand{\expwix}{\texttt{Exp3-WIX}\xspace}

\newcommand{\expixb}{\texttt{Exp3-IXb}\xspace}
\newcommand{\expixt}{\texttt{Exp3-IXt}\xspace}

\newcommand{\exph}{\texttt{Exp3}\xspace}
\newcommand{\exphix}{\texttt{Exp3-IX}\xspace}

\newcommand{\ti}{_{t,i}}
\newcommand{\tj}{_{t,j}}
\newcommand{\ji}{_{j,i}}
\newcommand{\Ii}{_{I_t,i}}
\newcommand{\hp}{\hat{p}}
\newcommand{\noise}{\xi}
\newcommand{\gweight}{s}

\newcommand{\loss}{\ell}
\newcommand{\oloss}{c}
\newcommand{\hloss}{\wh{\ell}}
\newcommand{\hL}{\widehat{L}}

\newcommand{\II}[1]{\mathbb{I}_{\ev{#1}}}

\newcommand{\R}{\mathbb{R}}
\newcommand{\EE}[1]{\mathbb{E}\left[#1\right]}
\newcommand{\EEcc}[2]{\mathbb{E}\left[\left.#1\right|#2\right]}
\newcommand{\ev}[1]{\left\{#1\right\}}
\newcommand{\pa}[1]{\left(#1\right)}
\newcommand{\bpa}[1]{\bigl(#1\bigr)}

\newcommand{\F}{\mathcal{F}}
\newcommand{\OO}{\mathcal{O}}
\newcommand{\wt}{\widetilde}
\newcommand{\wh}{\widehat}
\newcommand{\tOO}{\wt{\OO}}
\newcommand{\ra}{\rightarrow}

\newcommand{\sumT}{\sum_{t = 1}^T}
\newcommand{\sumj}{\sum_{j = 1}^N}
\newcommand{\sumi}{\sum_{i = 1}^N}

\newcommand{\avgalpha}{\alpha^*_{\text{avg}}}

\newcommand{\bp}{\boldsymbol{p}}

\usepackage[hidelinks]{hyperref}

\begin{document}

\twocolumn[

\aistatstitle{Online learning with noisy side observations}

\aistatsauthor{ Tom\'a\v s Koc\'ak \And Gergely Neu \And Michal Valko }

\aistatsaddress{ SequeL team \\INRIA Lille - Nord Europe \And Universitat Pompeu Fabra\\ Barcelona, Spain \And SequeL team \\ INRIA Lille - Nord Europe} ]

\begin{abstract}
We propose a new partial-observability model for online learning problems where the learner, besides its own
loss, also observes some \emph{noisy} feedback about the other actions, depending on the underlying structure of
the problem. We represent this structure by a weighted directed graph, where the edge weights are related to the
quality of the feedback shared by the connected nodes. Our main contribution is an efficient algorithm
that guarantees a regret of $\tOO(\sqrt{\alpha^* T})$ after $T$ rounds, where $\alpha^*$ is a novel  graph  property
that we call the \emph{effective independence number}. Our algorithm is completely parameter-free
and does not require knowledge (or even estimation) of $\alpha^*$. For the special case of binary edge weights, our
setting reduces to the partial-observability models of \citet{mannor2011from} and \citet{alon2013from} and our 
algorithm recovers the near-optimal regret bounds.
\end{abstract}



\section{Introduction}
The general framework of online learning considers sequential decision-making problems where a \emph{learner} repeatedly
chooses actions so as to minimize the sum of losses assigned by the \emph{environment} in response to the learner's
actions. 
After making each decision, the learner observes some feedback about the losses assigned by the environment.
Traditionally, the literature considers two types of feedback: \emph{full-information} feedback \citep{cesa-bianchi2006prediction}, 
where the learner observes the losses associated with \emph{all} potential decisions and \emph{bandit} 
feedback \citep{auer2002finite}  where the learner only observes the loss of its own decisions. More recently, 
\citet{mannor2011from} proposed a partial-feedback scheme that models situations that lie between the two extremes: in 
their model, the learner observes losses associated with some additional actions besides its own loss. While this 
framework is often more realistic than either of the two extremes, it fails to address one important practical concern: 
in reality, one can rarely expect \emph{perfect} side observations to be available. In the current paper, we propose a 
similar model that can incorporate \emph{imperfect} side observations corrupted by various levels of noise, depending on 
the problem structure.



As an illustration to our setting, consider the problem of controlling solar panels so as to maximize their power
production. In this problem, the learner has to repeatedly decide about the orientation of the
panels so as to find alignments with strong sunshine. Besides the amount of the energy
being actually produced in the current alignment, the learner can also possibly base its decisions on measurements of
sensors installed on the solar panel. However, the observations
generated by these sensors can be of variable quality depending on visibility conditions, the quality of the sensors and
the alignment of the panels. Overall, this problem can be seen as a bandit problem with noisy side observations fitting
into our framework, where actions correspond to alignments and the noisy side observations give 
information about similar alignments.


%

Intuitively, in the case when the noise level of side observations does not change with time, 
a possible strategy one can think of is to use only the observations from the ``most reliable'' sources and ignore
the rest. Having made the distinction between ``reliable'' and ``unreliable'', the learner could model the observation
structure in the framework of \cite{mannor2011from,alon2013from}, by treating every ``reliable'' observation
as \emph{perfect}. This approach raises two concerns. First, determining the cutoff for unreliable observations that
allows the ``most efficient'' use of information is a highly nontrivial design choice. 
As we show later, knowing the \emph{perfect cutoff} would help us to improve performance over the
pure bandit setting without side observations. Second, one has to address the \emph{bias} arising from handling every
reliable observation as perfect. While one can think of many obvious ways to handle this bias by appropriate weighting
observations, none of these solutions are directly compatible with the model of \cite{mannor2011from,alon2013from}.
Our main contribution in this paper is an algorithm that is able to deal with both issues \emph{without
the knowledge of the optimal cutoff.}

The main tool we use for modeling uncertain observations is a \emph{weighted directed graph} encoding the
quality of side observations. In this graph, the weight of the arc $i\rightarrow j$ measures the quality of the
side observation obtained from action $j$ when selecting action $i$. All weights are assumed to lie in the interval 
$[0,1]$, with a weight of $1$ corresponding to a perfectly accurate side observation, and a weight of $0$ corresponding 
to a side observation of useless noise. Our model generalizes the previously considered models of 
\citet{mannor2011from} and \citet{alon2013from}: their
respective settings are captured by considering undirected and directed graphs with binary weights in our framework. 
In these special cases, the \emph{independence number}\footnote{The independence number of a graph $G$ is defined as
the size of the largest set of points in the graph such that no two points within this set are connected.} $\alpha$ of
the observation graph plays a key role in characterizing the complexity of learning: the minimax regret after $T$
rounds is known to be $\wt\Theta(\sqrt{\alpha T})$. In this paper, we define a similar quantity for weighted graphs:
the \emph{effective independence number} $\alpha^*$ and propose a learning algorithm that enjoys a regret
bound of $\tOO(\sqrt{\alpha^* T})$ without any conditions made on the loss sequence. 
The effective independence number $\alpha^*$ is closely related to the cutoff threshold for noisy observations. 
Intuitively, it is linked to the independence number of a graph that only considers reliable observations. 
In practical scenarios, neither the cutoff nor $\alpha^*$ is ever known to the learner, which is the \emph{main challenge} we need to address. 
In any case, the most interesting situations for our setting are the cases when we can 
bound  $\alpha^*$ by a small quantity.

While we are mainly inspired by situations where the weights of the graph
are fixed and known in advance, we treat a more general setting where the observation structure can arbitrarily change
over time and the weights are revealed to the learner only after it has made its decision. Our algorithm is fully
adaptive in the sense that it does not require any prior knowledge of the sequence of observation graphs or the time
horizon. To achieve this result, we combine the \emph{implicit exploration} strategy
introduced by \citet{kocak2014efficient} with a loss estimation technique that effectively suppresses the observation
noise.
%
%
For the special case of binary weights, the effective independence number and the independence
number
coincide; otherwise $\alpha^*$ is bounded by the number of actions $N$. Thus, the regret bound of our algorithm is
of near-optimal order for binary graphs and is always within logarithmic factors of the minimax regret of order
$\sqrt{NT}$
for the standard multi-armed bandit problem without side observations. As we will show 
later in the paper, there are several interesting cases for which the effective independence number can be bounded in a 
nontrivial way.

Independently of the work presented in this paper, \citet*{wu2015online} considered an essentially identical 
partial-observability model for online learning: there, side observations are modeled as zero-mean Gaussian random 
variables with \emph{variance} depending on the chosen action. It is easy to see that their model and ours can capture 
exactly the same type of problems: a side observation with zero variance in their model corresponds to a perfect 
observation with weight one in our model while useless noise is equivalently represented by infinite-variance or 
zero-weight observations. The results of \citet{wu2015online} are, however, of a completely different flavor than the ones 
presented in the current paper; the primary difference being that \citeauthor{wu2015online} assume that the losses are 
i.i.d.~Gaussian random variables while our results hold without any assumptions made on the sequence of losses. The 
main contributions of \citeauthor{wu2015online}\@ are (i) a general problem-dependent lower bound on the regret and (ii) 
algorithms that work under the assumption that all the useful (i.e., finite-variance) side observations have the same 
variance. This latter assumption does not use the full strength of the framework  where the variance of side 
observations can vary for different actions. Notably, the regret bounds presented in our paper match (up to 
logarithmic factors) the lower bounds of \citet{wu2015online} for the special cases that they consider. That said, their lower 
bounds and our upper bounds are not directly comparable for more general observability graphs.

Besides the works mentioned above, several other partial-observability models have been considered in the literature.
The most general of these settings is the \emph{partial-monitoring} framework considered by \citet{bartok2011minimax,bartok2014partial}. 
Unlike our
model, this framework is most useful for identifying and handling feedback structures that are \emph{more restrictive}
than bandit feedback. In contrast, our framework deals with feedback structures that are strictly more expressive than
plain bandit feedback. Similarly to \citeauthor{bartok2011minimax}, the recent work of \citet{alon2015online} also considers a generalization of
the partial-observability models of \citet{mannor2011from} and \citet{alon2013from} that may be more restrictive than 
bandit
feedback. Another well-studied setting in machine learning is where the observations are corrupted by noise
irrespective of the decisions of the learner (see, e.g., \citealp{cesa-bianchi2010online}). Such settings do not pose an
exploration-exploitation dilemma to the learner and thus are not relevant to our goals.\footnote{In fact, it can be
shown by the techniques of \citet{devroye2013random} that in the setting of online learning with finite actions and
observations corrupted by the same level of i.i.d.~noise, the simplest possible strategy of \emph{following the leader}
gives near-optimal guarantees.}





\section{Background}
Let us now give the formal definition of our learning problem. We consider a sequential decision-making problem where a 
\emph{learner} and an \emph{environment} interact in the following way (see also Figure~\ref{fig:protocol}). In every 
round $t\in[T]=\ev{1,2,\dots,T}$, the environment selects a weighted graph $G_t$ with $N$ 
nodes and a loss function $\loss_t:[N]\ra[0,1]$ where $\loss_{t,i}$ is the loss associated with arm $i$. The weight of 
each arc $i\ra j$ in $G_t$ is denoted as $s_{t,(i,j)}$ and assumed to lie in $[0,1]$. 
Following the environment's move, the learner selects an \emph{action} (or \emph{arm}) $I_t\in[N]$ and incurs 
the loss $\loss_{t,I_t}$. Finally, the learner also observes $G_t$ and the feedback 
\[
c_{t,i} = \gweight_{t,(I_t,i)}\cdot \loss_{t,i} + \left(1-\gweight_{t,(I_t,i)}\right)\cdot \noise_{t,i}
\] 
for every arm $i$, where $\noise_{t,i}$ 
is the 
\emph{observation noise}. We assume that each $\noise_{t,i}$ is zero-mean, satisfies $|\noise_{t,i}|\le R$ for some 
known constant $R\ge 0$, and is generated independently of all other noise terms and the history of the 
process\footnote{We are mainly interested in the setting where $R = \Theta(1)$, that is, we are neither in the easy 
case where $R$ is close to zero or the hard one where it may be as large as $\Omega(\sqrt{T})$}. 
The interaction history between the learner and the environment up to the end of round~$t$ is captured by the 
sigma-algebra $\F_t$. In this work, we consider \emph{adaptive} (or \emph{non-oblivious}) environments that are allowed 
to choose $\loss_t$ and $G_t$ in full knowledge of the history $\F_{t-1}$. We also assume that all graphs $G_t$ are 
such that $\gweight_{t,(i,i)}=1$ for all $i$, that is, the learner always observes its own loss $\loss_{t,I_t}$ without 
corruption.

\begin{figure}[t]
 \vspace{-1em}
\centering
\fbox{
\begin{minipage}{.45\textwidth}
{\bfseries Parameters}: \\
\phantom{aa}set of arms $[N]$, number of rounds $T$.\\
{\bfseries For all $t=1,2,\dots,T$ repeat}
\begin{enumerate}
 \item The environment picks a loss function $\loss_t:[N]\ra [0,1]$ and a directed weighted graph $G_t$ with 
edge weights in $[0,1]$.
 \item Based on its previous observations (and possibly some source of randomness), the learner picks an action
$I_t\in[N]$.
 \item The learner suffers loss $\loss_{t,I_t}$.
 \item The learner observes  $G_t$ and the feedback 
 \[
c_{t,i} = s_{t,(I_t,i)}\cdot \loss_{t,i} + \left(1-s_{t,(I_t,i)}\right)\cdot \noise_{t,i}
 \] 
 for every arm $i\in[N]$.
\end{enumerate}
\end{minipage}
}
\caption{The protocol of online learning with noisy observations.}
\label{fig:protocol}
\end{figure}

The goal of the learner is to choose its actions so as to ensure that its cumulative loss grows as slowly as possible. 
As traditional in the online learning literature \citep{cesa-bianchi2006prediction}, we measure the performance of the 
learner in terms 
of the (total expected) \emph{regret} defined as the gap between the expected loss of the player and the expected loss 
of 
the best fixed-arm policy:
\[
R_T = \max_{i\in[N]}\EE{\sumT \ell_{t,I_t} - \sumT\ell\ti}. 
\]
In this paper, we are interested in constructing algorithms for the learner that guarantees a tight upper bound on the 
regret. Before proposing our algorithm, a few comments are in order. First, notice that our framework 
technically contains the settings of \citet{mannor2011from} and \citet{alon2013from} as special cases where the edge 
weights are chosen from $\ev{0,1}$: in this situation, our framework suggests that the learner either gets 
\emph{perfect} side-observations or just zero-mean noise, which can be safely ignored by the learner. Also, notice that 
since we assume $\gweight_{t,(i,i)}=1$ for all $i$, our problem is not harder for the learner than the standard 
multi-armed bandit problem. 
Indeed, thanks to this property, the learner could simply ignore all side observations and run a bandit algorithm such 
as \exph of \citet{auer2002finite} that guarantees a regret bound of $\OO(\sqrt{NT\log N})$.

\section{Algorithms and main result}

This section presents our main contribution: a learning algorithm with strong theoretical performance 
guarantees for the setting described in the previous section. As the intuitions underlying our algorithm are rather 
intricate, we will proceed gradually: we first identify the main challenges of constructing learning algorithms for our 
setting, then offer a solution that overcomes these difficulties in an efficient manner.

A central concept in our performance guarantees is a new graph property that we call \emph{effective independence  
number}, defined as follows:
\begin{definition}
Let $G$ be a weighted directed graph with $N$ nodes and edge weights bounded in $[0,1]$. For all
$\varepsilon\in[0,1]$, let $G(\varepsilon)$ be the (unweighted) directed graph where arc $i\ra j$ is present if and only
if $\gweight_{i,j} \ge \varepsilon$ in $G$. Letting $\alpha(\varepsilon)$ be the independence number of
$G(\varepsilon)$, the effective independence number of $G$ is defined as
\[
 \alpha^* = \min_{\varepsilon\in[0,1]}\frac{\alpha(\varepsilon)}{\varepsilon^2}\cdot
\]
\end{definition}
Roughly speaking, the effective independence number is a measure of connectivity of weighted graphs. A detailed 
discussion of the effective independence number is deferred to Section~\ref{sec:ind}. In what follows, 
we describe two learning algorithms that guarantee a regret bound depending on the effective independence numbers 
$(\alpha^*_t)$ of the observation graphs $(G_t)$ as $\tOO\pa{\sqrt{\sum_{t} \alpha^*_t}}$.

For presenting our ideas (and our eventual algorithm), we take as template the seminal \exph algorithm of 
\citet{auer2002finite}, as presented by \citet{bubeck2012regret} (see Algorithm~\ref{alg:algorithmTemplate}). The main idea 
of this algorithm is maintaining an estimate $\hloss_{t,i}$ of the losses $\ell_{t,i}$ for every $t$ and $i$ and 
choosing arm $i$ with probability proportional to $\exp\bpa{-\eta_t \sum_{s=1}^{t-1}\hloss_{s,i}}$ in round $t$, where 
$\eta_t>0$ is a parameter of the algorithm often called the \emph{learning rate}.
The main challenge in constructing a learning algorithm for our setting is designing appropriate estimates for the 
losses. In particular, it is obvious that the learner should not rely on observations with a high amount of noise in the 
same way as it relies on observations with almost no noise. One natural way to address this issue is explicitly 
distinguishing between ``reliable'' and ``unreliable'' side observations, and using only reliable sources for estimating 
losses. We first show that while this intuitive loss-estimation method does lead to strong performance guarantees, it 
requires a very careful choice of the cutoff parameter distinguishing reliable and unreliable sources. In 
Section~\ref{sec:wix}, we propose our main algorithm that overcomes this issue and guarantees equally strong performance 
guarantees without having to explicitly distinguish between reliable and unreliable sources.

\begin{algorithm}[t]
\caption{Algorithm template: \\  \  \ \exph \citep{auer2002finite}}
\label{alg:algorithmTemplate}
\begin{algorithmic}[1]
\STATE \textbf{Initialization:} $\hL_{0,i} = 0$ for all $i\in[N]$.
\FOR{$t = 1$ {\bfseries to} $T$}
\STATE Set $\eta_t$ and $\gamma_t$.
\STATE Construct the probability distribution $\bp_t$ with.
$$p\ti = \frac{\mbox{exp}\big(-\eta_t\hL_{t-1,i}\big)}{\sumj \mbox{exp}\big(-\eta_t\hL_{t-1,j}\big)}.$$
\STATE Play random arm $I_t$ according to $\bp_t$.
\STATE Incur loss $\ell_{t,I_t}$.
\STATE Observe $c\ti = \gweight_{t,(I_t,i)}\ell\ti + (1-\gweight_{t,(I_t,i)})\noise\ti$ for all $i\in[N]$.
\STATE Observe graph $G_t$.
\STATE Construct loss estimates $\hloss\ti$.
\STATE Set $\hL\ti = \hL_{t-1,i} + \hloss\ti$.
\ENDFOR
\end{algorithmic}
\end{algorithm}

\subsection{A na\"ive algorithm: \expixt}
We first consider an algorithm that bases its decisions on the following estimates of each $\loss_{t,i}$:
\begin{equation}\label{eq:cand1}
\hloss\ti^{\ (\mbox{\textsc{b}})} = \frac{c_{t,i}}{\sumj p\tj s_{t,(j,i)} + \gamma_t },
\end{equation}
where \textsc{b} stands for ``basic''.
Here, $\gamma_t\ge 0$ is a so-called \emph{implicit exploration} (or, in short, IX) parameter first used by 
\citet{kocak2014efficient} for decreasing the variance of importance-weighted estimates. Notice that setting 
$\gamma_t = 0$, makes estimates above unbiased since 
\[
 \EEcc{c_{t,i}}{\F_{t-1}} = \pa{\sum_{j=1}^N p\tj s_{t,(j,i)}}\cdot \loss_{t,i},
\]
where we used our assumption that $\EE{\noise_{t,i}}=0$. Using these estimates in our algorithmic template \exph (see 
Algorithm~\ref{alg:algorithmTemplate}), one would expect to get reasonable performance guarantees. Unfortunately 
however, we were not able to prove a performance guarantee for the resulting algorithm. 

A close examination reveals that the reason for the poor performance of the above algorithm is the large variance of 
the estimates~\eqref{eq:cand1} which is caused by including observations from ``unreliable sources'' with small 
weights. One intuitive idea is to explicitly draw the line between reliable and unreliable sources by cutting 
connections with weights under a certain threshold. This effect is realized by the estimates
\begin{equation}\label{eq:cand2}
\hloss\ti^{\ (\mbox{\textsc{t}})} = \frac{c_{t,i}\II{s_{t,(I_t,i)}\ge \varepsilon_t}}{\sumj p\tj s_{t,(j,i)} 
\II{s_{t,(j,i)}\ge \varepsilon_t} + \gamma_t},
\end{equation}
where $\varepsilon_t\in[0,1]$ is a threshold value and \textsc{t} stands for ``thresholded''. We call the algorithm 
resulting from using the above estimates in Algorithm~\ref{alg:algorithmTemplate} \expixt, standing for ``\exph with 
Implicit eXploration and Truncated side-observation weights''. Thanks to the thresholding operation, the variance of 
the loss estimates can be nicely controlled and it becomes possible to prove a strong performance guarantee for \expixt.
In particular, we prove the following result about the regret of \expixt:
\begin{theorem}
\label{thm:naive}
For all $t$, let $\alpha^*_t$ be the effective independence number of $G_t$. Then, there exists a setting of 
$(\eta_t)$ and $(\gamma_t)$ for which the regret of \expixt is bounded as
\[
R_T = \tOO\pa{(1+R)\sqrt{\sum_{t=1}^T \frac{\alpha(G_t(\varepsilon_t))}{\varepsilon_t^2}}}.
\]
\end{theorem}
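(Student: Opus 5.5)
The proof follows the standard analysis of \exph with implicit exploration \citep{kocak2014efficient}, adapted to noisy, thresholded observations. Write $Q_{t,i} = \sumj p\tj s_{t,(j,i)}\II{s_{t,(j,i)}\ge\varepsilon_t}$, and let $G_t(\varepsilon_t)$ be the unweighted graph from the definition of $\alpha^*$.

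\textbf{Step 1: generic exponential-weights bound.} With a nonincreasing learning rate $\eta_t$, the potential-based argument of \citet{bubeck2012regret} gives, for any fixed arm $k$,
\[
\sumT\sumi p\ti\hloss\ti-\sumT\hloss_{t,k}\le\frac{\log N}{\eta_T}+\sumT\frac{\eta_t}{2}\sumi p\ti\hloss\ti^{\,2}.
\]
This uses $e^{-x}\le 1-x+x^2/2$, which is valid for $x\ge0$. Because $c_{t,i}$ can be negative due to noise, I will handle the estimates through $|\hloss\ti|$ instead. Each estimate satisfies $|\hloss\ti|\le (1+R)/\gamma_t$. I therefore choose $\eta_t\le\gamma_t/(1+R)$, so that $\eta_t|\hloss\ti|\le1$, and use $e^{-x}\le1-x+x^2$ for $x\ge-1$. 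This only changes the constants.

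\textbf{Step 2: bias and noise of the estimates.} Conditionally on $\F_{t-1}$, the noise has mean zero. Hence
\[
\EEcc{\hloss\ti}{\F_{t-1}}=\frac{\sumj p\tj s_{t,(j,i)}\II{s_{t,(j,i)}\ge\varepsilon_t}}{Q_{t,i}+\gamma_t}\,\ell\ti=\frac{Q\ti}{Q\ti+\gamma_t}\ell\ti .
\]
This is at most $\ell\ti$, so the comparator term $-\hloss_{t,k}$ is fine in expectation. The learner's term loses at most
\[
\sumi p\ti\ell\ti\frac{\gamma_t}{Q\ti+\gamma_t}\le\gamma_t\sumi\frac{p\ti}{Q\ti+\gamma_t}.
\]
For the second moment, $c_{t,i}^2\le(1+R)^2$ and the indicator restricts to $s\ge\varepsilon_t$. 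This gives
\[
\EEcc{\hloss\ti^{\,2}}{\F_{t-1}}\le(1+R)^2\frac{\sumj p\tj\II{s_{t,(j,i)}\ge\varepsilon_t}}{(Q\ti+\gamma_t)^2}.
\]
The key observation is that $Q\ti\ge\varepsilon_t P\ti$, where $P\ti=\sumj p\tj\II{s_{t,(j,i)}\ge\varepsilon_t}$ is the probability of observing $i$ in $G_t(\varepsilon_t)$. Both the variance and bias terms are therefore bounded, up to $\varepsilon_t$ factors, by
\[
\frac{1}{\varepsilon_t^{2}}\sumi\frac{p\ti}{P\ti+\gamma_t/\varepsilon_t}.
\]
The exact power of $\varepsilon_t$ needs care; I will write $\gamma_t/\varepsilon_t$ as the effective IX parameter.

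\textbf{Step 3: graph lemma (main obstacle).} Next I invoke the directed-graph lemma of \citet{alon2013from} and \citet{kocak2014efficient}:
\[
\sumi\frac{p\ti}{P\ti+\gamma}\le 2\alpha\log\Bigl(1+\frac{\lceil N^2/\gamma\rceil+N}{\alpha}\Bigr)+2,
\]
applied with $\alpha=\alpha(G_t(\varepsilon_t))$. Combining, the per-round expected regret contributions are $\tOO\bigl((\eta_t(1+R)^2+\gamma_t)\,\alpha(G_t(\varepsilon_t))/\varepsilon_t^2\bigr)$. The delicate point is tracking the $\varepsilon_t$ powers so that exactly $\varepsilon_t^{-2}$ appears in both terms. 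It is also where the cutoff must hold, since the graph lemma needs self-loops, and these are guaranteed by $s_{t,(i,i)}=1\ge\varepsilon_t$.

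\textbf{Step 4: tuning.} Let $A_t=\sum_{s\le t}\alpha(G_s(\varepsilon_s))/\varepsilon_s^2$. Set
\[
\eta_t=\gamma_t/(1+R)=\sqrt{\log N/\bigl((1+R)^2(N+A_{t-1})\bigr)},
\]
which is computable before round $t$. The standard lemma $\sum_t a_t/\sqrt{\sum_{s\le t}a_s}\le2\sqrt{\sum_t a_t}$ then yields
\[
R_T=\tOO\Bigl((1+R)\sqrt{A_T}\Bigr).
\]
The shift by $A_{t-1}$ versus $A_t$ is handled by the additive $N$, and the logarithmic terms from the graph lemma are absorbed into $\tOO$.
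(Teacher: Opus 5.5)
\textbf{Steps 1--3 match the paper.} The paper also starts from the exponential-weights bound \eqref{eq:genbound}. It bounds the bias by $\gamma_t Q'_t$, where $Q'_t=\sum_i p_{t,i}/(Q_{t,i}+\gamma_t)$ in your notation. It bounds the second moment by $(1+R^2)Q'_t/\varepsilon_t$; your inequality $P_{t,i}\le Q_{t,i}/\varepsilon_t$ is the clean justification of that step. It then applies the lemma of \citet{kocak2014efficient} to $G_t(\varepsilon_t)$ with IX parameter $\gamma_t/\varepsilon_t$.

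\textbf{The gap is in Step 4.} Write $a_t=\alpha(G_t(\varepsilon_t))/\varepsilon_t^2$. Passing from $\sum_t a_t/\sqrt{N+A_{t-1}}$ to $2\sqrt{A_T}$ requires $N+A_{t-1}\ge A_t$, i.e.\ $a_t\le N$ in every round. In Theorem~\ref{thm:mainTheorem} the analogous condition holds because $Q_t\le N$, but here nothing caps $a_t$. Since $\alpha\ge 1$, $a_t\ge\varepsilon_t^{-2}>N$ whenever $\varepsilon_t<1/\sqrt N$, and the theorem places no restriction on the thresholds. If, say, $a_1=M\gg N$, round~1 alone contributes roughly $(1+R)\sqrt{\log N}\,M/\sqrt N$ to your bound. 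That is not $\tOO\bigl((1+R)\sqrt{A_T}\bigr)$ when $M\gtrsim T$. You also cannot put $a_t$ itself into $\eta_t$, because $G_t$ is revealed only after $I_t$ is drawn. A smaller point: your final bound carries an additive $N$ under the square root that the statement does not have. It is harmless only when $A_T\gtrsim N$, e.g.\ when $T\ge N$, since $a_t\ge 1$.

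\textbf{How the paper avoids this, and how you could.} The paper sidesteps the one-step shift entirely. It assumes the graph sequence is fixed before the first round (oblivious) and takes constant $\eta_t=\eta$, $\gamma_t=\gamma$, with $\gamma\ge R\eta$ as needed for the $e^{-x}\le 1-x+x^2$ step. It then optimizes them using the known totals $\sum_t\alpha(G_t(\varepsilon_t))/\varepsilon_t^2$ and $\sum_t\alpha(G_t(\varepsilon_t))/\varepsilon_t$. With a single global tuning, arbitrarily small thresholds cause no trouble and no additive $N$ appears.

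Your adaptive tuning would buy more: no prior knowledge, and adaptively chosen graphs. To make it work, you must dominate $a_t$ by something known before round $t$. For example, replace $N$ by $B_t=N\max_{s\le t}\varepsilon_s^{-2}$. This is valid because $\alpha\le N$, and it keeps $\eta_t$ nonincreasing, provided each threshold is set before its round. It yields $\tOO\bigl((1+R)\sqrt{B_T+A_T}\bigr)$, which is the claim up to an additive term that is negligible, e.g., for a fixed threshold once $T\ge N$. Alternatively, restrict to thresholds with $a_t\le N$. This includes the intended choice $\varepsilon_t\in\arg\min_\varepsilon\alpha(G_t(\varepsilon))/\varepsilon^2$, since then $a_t=\alpha^*_t\le N$, and in that case your proof goes through as written.
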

The theorem is proved in the Appendix.
Note that if we choose $\varepsilon_t = \arg\min_{\varepsilon\in[0,1]}\frac{\alpha(G_t(\varepsilon))}{\varepsilon^2}$ 
for 
all $t$, the above bound essentially becomes $\tOO(\sqrt{\avgalpha T})$ where $\avgalpha = \frac 1T \sum_{t=1}^T 
\alpha^*_t$ is the average effective independence number of the sequence of graphs played by the environment. Note 
however that tuning $\varepsilon_t$ can be a very challenging task in practice, since computing independence 
numbers in general is known to be NP-hard. Even worse, computing the \emph{effective} independence number 
of a weighted graph can require computing up to $N^2$ independence numbers. In the next section, we propose an adaptive 
algorithm that does not need to tune this parameter and still manages to guarantee the same regret bound.


\subsection{An adaptive algorithm: \expwix}\label{sec:wix}
This section presents our main algorithm that obtains strong regret bounds without having to estimate any effective 
independence numbers. The key element of this algorithm is using loss estimates of the form
\begin{equation}\label{eq:est}
\hloss\ti = \frac{\gweight_{t,(I_t,i)} \cdot c\ti}{\sumj p\tj \gweight_{t,(j,i)}^2 + \gamma_t},
\end{equation} 
where $\gamma_t\ge 0$ is again the so-called implicit exploration parameter already introduced in 
the previous section. Notice that the difference from the estimates~\eqref{eq:cand1} is that the observation $c\ti$ 
is multiplied by the weight of useful information in $c_{t,i}$ and the denominator is modified accordingly, so that the 
estimates are unbiased when setting $\gamma_t = 0$ since
\[
 \EEcc{s_{t,(I_t,i)}\cdot c_{t,i}}{\F_{t-1}} = \pa{\sum_{j=1}^N p\tj s_{t,(j,i)}^2}\cdot \loss_{t,i}.
\]
The role of this scaling is pulling the noise term $\noise\ti$ toward zero for actions $i$ with small weights 
$\gweight\Ii$, and thus achieving a similar variance-reducing effect as the truncations employed by \expixt.

Armed with the loss estimates~\eqref{eq:est}, we are ready to define our algorithm: \exph (presented as 
Algorithm~\ref{alg:algorithmTemplate}) with Weighted observations and Implicit eXploration, or, in short, \expwix. 
Overall, \expwix has two set of parameters to tune: the sequence of learning rates $\pa{\eta_t}_t$ and the sequence of 
IX parameters $\pa{\gamma_t}_t$. Our main theorem below states the performance guarantees of \expwix with an adaptive 
learning-rate sequence that does not need any prior knowledge about the number of rounds or the nature of the 
side-observation graphs. The key quantity for computing the parameters $\eta_t$ and $\gamma_t$ is
\[
 Q_t = \sum_{i=1}^N \frac{p_{t,i}}{\sum_{j=1}^N p_{t,j} s_{t,(j,i)}^2 + \gamma_t},
\]
defined for all $t$. 
\begin{theorem}
\label{thm:mainTheorem}
For all $t$, let $\alpha^*_t$ be the effective independence number of $G_t$. Then, setting 
\[
\eta_t = \sqrt{\frac{{\log N}}{{2(1+R+R^2)(N + \sum_{s=1}^{t-1}Q_s)}}}
\]
and $\gamma_t =R\eta_t$, the regret of \expwix is bounded as
\[
R_T = \tOO\pa{(1+R)\sqrt{N+\sum_{t=1}^T \alpha^*_t}}.
\]
\end{theorem}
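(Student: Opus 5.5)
The proof follows the standard analysis of \exph with implicit exploration and adaptive learning rates, in the style of \citet{kocak2014efficient}. It splits into three parts: a deterministic regret bound in terms of the estimates, control of the bias and noise those estimates introduce, and a graph-theoretic bound on $Q_t$ through $\alpha^*_t$.

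\textbf{Step 1 (exponential weights).} Since $\eta_t$ is nonincreasing, the usual potential-based analysis of exponential weights applies. Provided $\eta_t\hloss\ti\ge -1$, so that $e^{-x}\le 1-x+x^2$ can be used, it gives for every fixed arm $k$
\[
\sumT\sumi p\ti\hloss\ti-\sumT\hloss_{t,k}\le \frac{\log N}{\eta_T}+\sumT\eta_t\sumi p\ti\hloss\ti^2 .
\]
The estimates may be negative because of the noise. However, $|s_{t,(I_t,i)}c\ti|\le s(1+R)$ for $s=s_{t,(I_t,i)}$, and the denominator is at least $\gamma_t=R\eta_t$. So $\eta_t|\hloss\ti|\le (1+R)/R$, which is not automatically at most $1$. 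I expect this is handled by bounding only the noise part, $\eta_t s(1-s)|\noise\ti|/\gamma_t\le 1$, which is exactly what $\gamma_t=R\eta_t$ guarantees. The loss part of the estimate is nonnegative and causes no trouble.

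\textbf{Step 2 (expectations).} Conditionally on $\F_{t-1}$, we have $\EE{\hloss\ti}=\ell\ti\,\frac{P\ti}{P\ti+\gamma_t}$, where $P\ti=\sum_j p\tj s_{t,(j,i)}^2$.
\begin{itemize}
\item The learner's side is underestimated by $\sum_i p\ti\ell\ti\gamma_t/(P\ti+\gamma_t)\le \gamma_t Q_t$.
\item The comparator is overestimated, so $\EE{\hloss_{t,k}}\le\ell_{t,k}$ and that term has the right sign.
\item For the second moment, $s^2c^2\le s^2(s\ell+(1-s)R)^2\le s^2(1+R)^2$ up to constants, which gives $\EE{\hloss\ti^2}\le (1+R+R^2)\,\frac{2P\ti}{(P\ti+\gamma_t)^2}$. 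Hence $\EE{\sum_i p\ti\hloss\ti^2}\le 2(1+R+R^2)Q_t$.
\end{itemize}
Combining with Step 1, the regret is at most
\[
\EE{\frac{\log N}{\eta_T}+\sumT\bigl(2(1+R+R^2)\eta_t+R\eta_t\bigr)Q_t}.
\]
With the stated $\eta_t$, and using $Q_t\le N$ (because $\gamma_t\ge0$ and $s_{t,(i,i)}=1$ give $P\ti\ge p\ti$), the standard lemma $\sum_t a_t/\sqrt{N+\sum_{s<t}a_s}\le 2\sqrt{N+\sum_t a_t}$ for $a_t\le N$ yields
\[
R_T=\OO\Bigl((1+R)\sqrt{\log N\,\bigl(N+\textstyle\sum_t \EE{Q_t}\bigr)}\Bigr).
\]

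\textbf{Step 3 (graph lemma, the main obstacle).} It remains to show $Q_t\le \tOO(\alpha^*_t)$. Fix the minimizing $\varepsilon$ for $G_t$.
\begin{itemize}
\item Restricting the sum inside $P\ti$ to arcs with $s_{t,(j,i)}\ge\varepsilon$ gives $P\ti\ge \varepsilon^2\sum_{j\in N^-_\varepsilon(i)}p\tj$, where $N^-_\varepsilon(i)$ is the in-neighbourhood of $i$ in $G_t(\varepsilon)$, self-loop included.
\item Then $Q_t\le \varepsilon^{-2}\sum_i p\ti/\bigl(\sum_{j\in N^-_\varepsilon(i)}p\tj+\gamma_t/\varepsilon^2\bigr)$.
\item The IX lemma of \citet{kocak2014efficient} for directed graphs bounds such a sum by $\OO\bigl(\alpha(\varepsilon)\log(N^2/(\gamma_t\varepsilon^{-2}\cdots))\bigr)$, i.e.\ $\OO(\alpha(\varepsilon)\log(NT))$ up to logarithmic terms.
\end{itemize}
This gives $Q_t=\tOO(\alpha^*_t)$. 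The delicate part is the IX lemma itself: its logarithmic factor depends on $\gamma_t$, and one must check that the $\varepsilon^{-2}$ rescaling of $\gamma_t$ keeps the log polynomial in $N,T$. If $\varepsilon$ is tiny this could blow up, but then $\alpha^*_t\ge 1/\varepsilon^2$ is large and the trivial bound $Q_t\le N$ suffices. Plugging $Q_t=\tOO(\alpha^*_t)$ into Step 2 gives the claimed $\tOO\bigl((1+R)\sqrt{N+\sum_t\alpha^*_t}\bigr)$.
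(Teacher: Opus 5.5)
Your proposal is correct and follows the paper's proof essentially step for step. It uses the adaptive-rate exponential-weights bound with $\gamma_t=R\eta_t$ guaranteeing $\eta_t\hloss\ti\ge-1$ (Lemma~\ref{lem:regretbound}), then the thresholding $\sumj p\tj s_{t,(j,i)}^2\ge\varepsilon_*^2\sum_{j\in N^-_{\varepsilon_*}(i)}p\tj$ followed by the IX lemma of \citet{kocak2014efficient} with rescaled parameter $\gamma_t/\varepsilon_*^2$ (Lemma~\ref{lem:qtupperbound}), and finally Lemma~3.5 of \citet{auer2002adaptive} to sum the $\eta_t Q_t$ terms. The only deviations are harmless: your pathwise bound $|c\ti|\le 1+R$ in the second moment loses a constant factor relative to the paper's $(1+R^2)Q_t$, and your worry about tiny $\varepsilon$ is unnecessary because $\varepsilon^2N^2/\gamma_t\le N^2/\gamma_t$, so the rescaling only shrinks the logarithm.
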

The theorem is proved in Section~\ref{sec:analysis}.
In plain words, Theorem~\ref{thm:mainTheorem} guarantees that the regret of \expwix grows as
$\tOO(\sqrt{\avgalpha T})$. Notice  that  in order  to obtain this regret bound, \expwix never needs to compute the 
effective independence number of any of the observation graphs.
This saves us from a significant computational overhead as compared to the na\"ive algorithm \expixt that 
needed to set a truncation parameter to discard unreliable observations.


\section{The effective independence number}\label{sec:ind}
The previous section has established that the performance guarantees of our algorithms can be expressed in terms of the
effective independence number of the observation graphs. In this section, we provide some basic insights about the
nature of this quantity and describe some graph structures with small effective independence numbers. 

\begin{figure*}
 \vspace{-0.2em}
    \centering
    	\begin{subfigure}[t]{0.3\textwidth}
	        \includegraphics[trim=4.5cm 8.5cm 4.5cm 8.5cm, width=\textwidth]{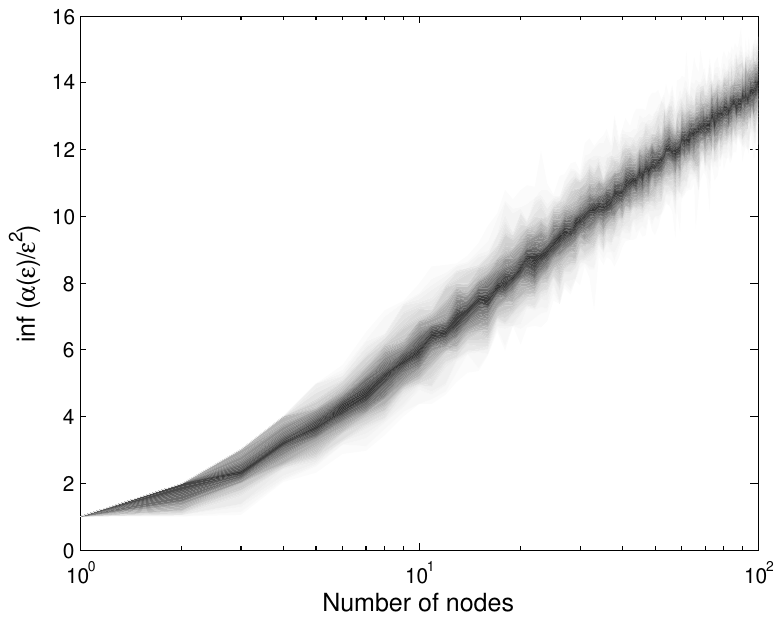}
        	\caption{$U(0,1)$ weights}
	    	\label{fig:U(0,1)weights}
        \end{subfigure}
        \quad
      	\begin{subfigure}[t]{0.3\textwidth}
        	\includegraphics[trim=4.5cm 8.5cm 4.5cm 8.5cm, width=\textwidth]{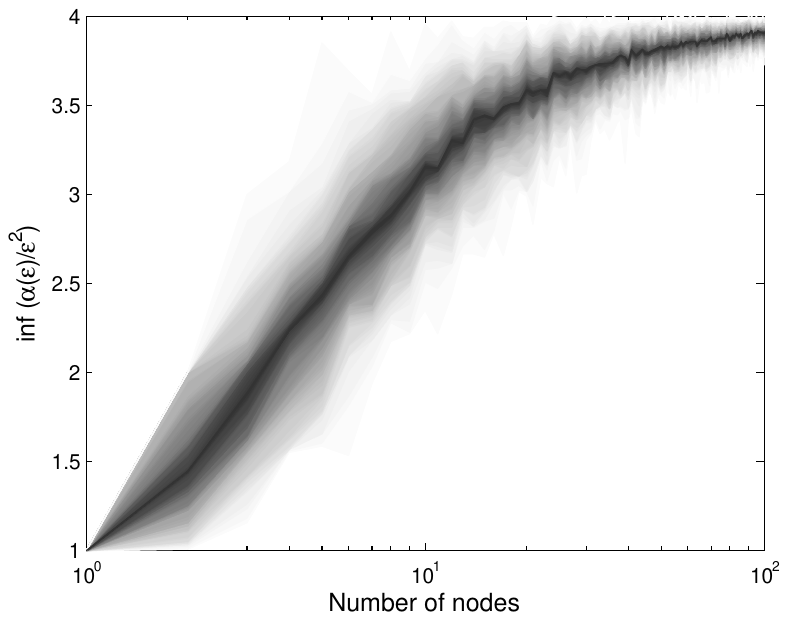}
        	\caption{$U(\frac 12,1)$ weights}
        	\label{fig:U(0.5,1)weights}
        \end{subfigure}
        \quad
        \begin{subfigure}[t]{0.3\textwidth}
        	\includegraphics[trim=4.5cm 8.5cm 4.5cm 8.5cm, width=\textwidth]{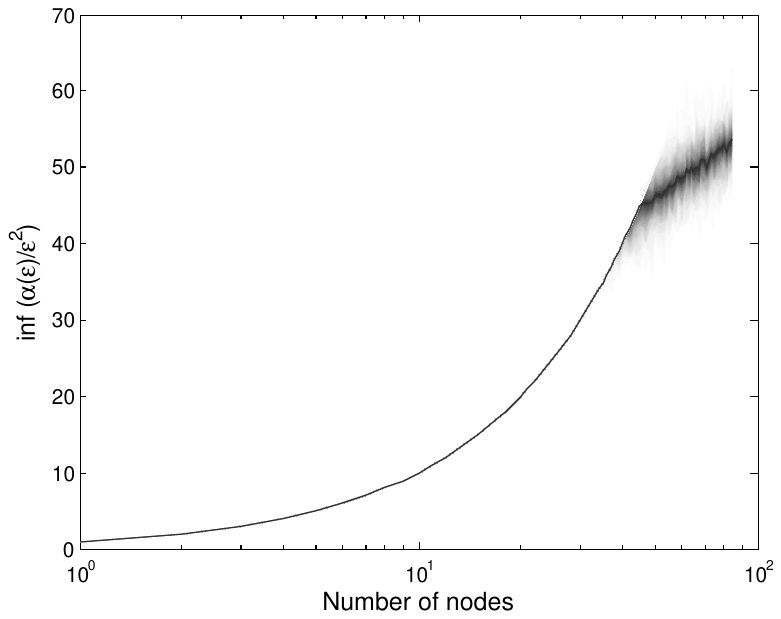}
        	\caption{$U(0,\frac 12)$ weights}
        	\label{fig:U(1,0.5)weights}
        \end{subfigure}
        \caption{Dependence of $\alpha^*$ on the size of the graph with random weights, 100 graphs for each size.}
        \label{fig:randweights}
\end{figure*}

The first observation we make is that the effective independence number is always well-defined, as the 
function $\alpha(\varepsilon)/\varepsilon^2$ can be easily shown to be piecewise decreasing and lower semicontinuous 
with at most $N$ discontinuities. Thanks to these properties, this expression takes its minimum within the closed 
interval $[0,1]$. Second, we note that the effective independence number of any weighted graph is 
trivially bounded by the number $N$ of the nodes in the graph. This follows from the fact that $\alpha^*\le \alpha(1)/1 
\le N$. This essentially guarantees that incorporating side-observations can never be harmful to the performance of the 
learner: the regret  of \expwix is always within logarithmic factors of the minimax regret of order $\sqrt{NT}$ for the
standard multi-armed bandit problem without side observations. 

It is also easy to see that the effective independence number exactly matches the independence number if all edge 
weights are binary. This, in particular, implies that for such graphs, the regret of \expwix grows at the minimax rate 
established by \citet{alon2013from} up to logarithmic factors, matching the performance guarantees of the algorithms of 
\citet{alon2013from} and \citet{kocak2014efficient}. 
Another interesting case is when all weights are either zero or equal to a fixed constant~$\varepsilon$, also assuming 
$s_{i,i}=\varepsilon$. In this case, the effective independence number becomes $\frac{\alpha}{\varepsilon^2}$, 
where $\alpha$ is the independence number of the underlying unweighted graph. This case was studied in the 
recent paper of \citet{wu2015online}, who show (in their Corollary~4) that the \emph{minimax} regret in this case is of
$\Theta(\sqrt{\alpha T}/\varepsilon)$---implying that our performance bounds for this case are again 
near-optimal\footnote{While we prove our bounds for the case where $s_{i,i}=1$ for all $i$, it is easy to extend our 
results to the case where all such weights equal a constant in $[0,1]$.}.
Also, observe that whenever all weights are bounded by some constant $c>0$ from below, the effective independence number 
becomes upper-bounded by $1/c^2$, \emph{irrespective of the number of actions}. That is, our algorithm can achieve an 
\emph{exponential} performance gain over bandit algorithms in terms of $N$ by leveraging such feedback structures.

%

Let us now describe a class of weighted graphs with bounded effective independence numbers. Consider a geometric graph
whose nodes represent vertices of a uniform $k\times k$ grid on $[0,1]^2$. The weight of edge $(i,j)$ is given as
$1/(1+d_{i,j}^2)$, where $d_{i,j}$ is the Euclidean distance of the respective vertices represented by $i$ and $j$.
This graph can be used to model a sensor network where the measurement accuracy of measurements degrades with the
distance. Thus, reading the measurements from one sensor will give information about the measurements of nearby sensors
as well. Intuitively, increasing the number of sensors (i.e., refining the grid) should only improve the
information-sharing between sensors up to a certain level. It is natural to expect a reasonable graph property
quantifying the information-sharing efficiency to capture this intuition. We have numerically evaluated the effective
independence number of a number of graphs from the above family to test if it satisfies the above criterion. We have
found that the effective independence numbers remain bounded by a \emph{constant} (roughly 30) even when refining the
grid infinitely, confirming that the effective independence number captures the above phenomenon.

Finally, we conducted some numerical simulations to evaluate the average effective independence numbers of certain
types of weighted random graphs. In particular, we considered random graphs with i.i.d.~weights distributed uniformly
on $[0,1]$, $[\frac 12, 1]$ and $[0,\frac 12]$. The distributions of the effective independence numbers are illustrated
as scatter plots for different graph sizes on Figure~\ref{fig:randweights}. First, observe that the average $\alpha^*$
of $U(0,1)$-weighted graphs shows a logarithmic trend in terms of~$N$. The results concerning $U(\frac 12,1)$-weighted
graphs are not surprising given that we have already established that graphs with bounded weights have finite effective
independence numbers. For $U(0,\frac 12)$-weighted graphs, we see that $\alpha^*$ grows linearly up until a certain
threshold when it starts to follow a logarithmic trend. The intuition behind this linear behavior for small graphs is
the following. First, observe that the optimal value of $\varepsilon$ is greater than $1/\sqrt{N}$. That is, until $N$
is large enough so that a critical mass of edges are above this quantity, the optimal value of
$\alpha(\varepsilon)/\varepsilon^2$ remains $N$. Once $N$ is beyond this critical value, $\alpha^*$ starts following a
logarithmic trend.
\section{Analysis}\label{sec:analysis}
Let us now turn to proving Theorem~\ref{thm:mainTheorem}. Our analysis is slightly more general than 
necessary for proving Theorem~\ref{thm:mainTheorem}, in that they allow any sequence of learning rates and IX 
parameters. To avoid clutter, we will omit the~$t$ indices from $s_{t,(\cdot,\cdot)}$. In principle, our analysis 
combines  (more-or-less) standard tools for analyzing \exph with adaptive learning rates and ideas from 
\citet{alon2013from} and \citet{kocak2014efficient}, while also heavily exploiting the structure of our loss 
estimates~\eqref{eq:est}.
In particular, these estimates allow us to bound the expected regret of \expwix in terms of the quantities~$(Q_t)$.
\begin{lemma}
\label{lem:regretbound}
Let $\pa{\eta_t}_t$ and $\pa{\gamma_t}_t$ be two $\pa{\F_t}$-measurable non-increasing sequences satisfying $\gamma_t
\ge \eta_t R$ for all $t$. Then, the expected regret of \expwix is bounded as
\[
R_T \leq \EE{\frac{\log N}{\eta_T} + \sumT \left(\gamma_t + (1+R^2)\eta_t\right)Q_t}.
\]
\end{lemma}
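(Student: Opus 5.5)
We follow the standard potential-based analysis of \exph with time-varying learning rates, as in \citet{kocak2014efficient}. The key inequality is the one for exponential weights with non-increasing $\eta_t$. It requires $\eta_t\hloss\ti \ge -1$ so that $e^{-x}\le 1-x+x^2$ can be used, since the estimates~\eqref{eq:est} can be negative because of the noise. We check this first. Since $|c\ti|\le \max(1,R)$ and the noise part is $(1-s_{I_t,i})s_{I_t,i}\noise\ti$, the negative part of $\hloss\ti$ is at most $R\, s_{I_t,i}/(\sum_j p\tj s_{j,i}^2+\gamma_t)\le R/\gamma_t$. The condition $\gamma_t\ge \eta_t R$ then gives $\eta_t\hloss\ti\ge -1$.

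With this in hand, the usual argument yields, for every fixed arm $k$,
\[
\sumT\sumi p\ti\hloss\ti - \sumT \hloss_{t,k} \le \frac{\log N}{\eta_T} + \sumT \eta_t\sumi p\ti \hloss\ti^2 .
\]
The $\log N/\eta_T$ term comes from telescoping $\frac1{\eta_t}\log W_t$ and from monotonicity of $\eta_t$: the map $\eta\mapsto \frac1\eta\log\frac1N\sum_i e^{-\eta \hL_i}$ is non-decreasing. We then take expectations, using that $\eta_t,\gamma_t$ are $\F_{t-1}$-measurable. This measurability is needed, and the statement's ``$\F_t$'' presumably means predictable in this sense.

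\textbf{First-order terms.} We have $\EEcc{\hloss\ti}{\F_{t-1}} = \loss\ti\,\frac{\sum_j p\tj s_{j,i}^2}{\sum_j p\tj s_{j,i}^2+\gamma_t}$. This is at most $\loss\ti$, so $\EE{\hloss_{t,k}}\le \EE{\loss_{t,k}}$. It is also equal to $\loss\ti - \gamma_t\loss\ti/(\sum_j p\tj s_{j,i}^2+\gamma_t)$, so
\[
\EEcc{\sumi p\ti\hloss\ti}{\F_{t-1}} \ge \sumi p\ti\loss\ti - \gamma_t Q_t .
\]
Since $\EEcc{\loss_{t,I_t}}{\F_{t-1}}=\sum_i p\ti\loss\ti$, this produces the $\gamma_t Q_t$ term.

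\textbf{Second-order term.} This is the main obstacle: bounding $\EEcc{\hloss\ti^2}{\F_{t-1}}$ by $(1+R^2)/(\sum_j p\tj s_{j,i}^2+\gamma_t)$. Write $D_i=\sum_j p\tj s_{j,i}^2+\gamma_t$. Then
\[
s_{I_t,i}^2 c\ti^2 = s^2\bpa{s\loss\ti + (1-s)\noise\ti}^2 .
\]
Its conditional expectation over the independent zero-mean noise is $s^4\loss\ti^2 + s^2(1-s)^2\EE{\noise\ti^2}$, where the cross term vanishes. This is at most $s^2(1+R^2)$, using $s^4\le s^2$ and $(1-s)^2\le 1$. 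Averaging over $I_t$ gives $\EEcc{\hloss\ti^2}{\F_{t-1}}\le (1+R^2)\sum_j p\tj s_{j,i}^2/D_i^2\le (1+R^2)/D_i$. Hence $\EE{\eta_t\sumi p\ti\hloss\ti^2}\le \EE{(1+R^2)\eta_t Q_t}$.

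Combining the three bounds and taking the maximum over $k$ (the bound holds for each $k$ with the same right-hand side) gives the lemma.
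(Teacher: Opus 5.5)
Your proposal is correct and follows essentially the same route as the paper. It uses the exponential-weights potential argument with $e^{-x}\le 1-x+x^2$, and the monotonicity of $\eta\mapsto\frac{1}{\eta}\log\frac{1}{N}\sum_i e^{-\eta \hL_i}$, which is exactly the paper's Jensen step for the decreasing learning rate. It then applies the same first- and second-order conditional-expectation bounds, producing $\gamma_t Q_t$ and $(1+R^2)\eta_t Q_t$.

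Two of your additions fill in points the paper only asserts or glosses over. You verify explicitly that $\gamma_t\ge\eta_t R$ gives $\eta_t\hloss\ti\ge -1$. You also note that $\eta_t$ and $\gamma_t$ must in fact be $\F_{t-1}$-measurable for the argument to go through.
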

The full proof of the lemma is delegated to the Appendix. Below, we provide a brief sketch covering the key parts of the
proof.
\begin{proof}[Proof sketch]
 By straightforward adaptation of the techniques of
\citet{auer2002finite,bubeck2012regret,gyorfi2007sequential,kocak2014efficient}, we can prove the bound
\[
\begin{split}
\EE{\sum_{t=1}^T \sumi p\ti\hloss\ti} \le& 
\EE{\hL_{T,j} + \frac{\log N}{\eta_T}} 
\\
&+ \EE{\sumT \eta_t \sumi p\ti\left(\hloss\ti\right)^2}.
\end{split}
\]
for any fixed $j$
Thus, we are left with the problem of relating the left-hand side to the total expected loss of the learner and to
upper-bounding the right-hand side.
As the first step, observe that
\begin{align*}
\EEcc{\sumi p\ti\hloss\ti}{\!\F_{t-1}\!} &= \sumi p\ti\frac{\sumj p\tj \gweight^{2}\ji\ell\ti}{\sumj p\tj
\gweight^{2}\ji + \gamma_t}	\\
& \ge \sumi p\ti\ell\ti -
\gamma_t Q_t,
\end{align*}
where we used $\EEcc{\noise\ti}{\F_{t-1}} = 0$ in the first step and $s_{j,i}\le 1$ in the second. The first term on 
the right-hand side can be bounded by $L_{T,j}$ by observing that
$\EEcc{\hloss_{t,j}}{\F_{t-1}}\le \loss_{t,j}$ holds for all fixed $j$ by the definition of the loss
estimates~\eqref{eq:est}. Finally, the last term is bounded as
\begin{align*}
&\EEcc{\sumi p\ti\pa{\hloss\ti}^2}{\!\F_{t-1}\!} \le 
\sumi p\ti\frac{\sumj p\tj \gweight^{2}\ji(1+R^2)}{\left(\sumj p\tj
\gweight^{2}\ji + \gamma_t\right)^2}
\\
&\qquad\qquad\leq \sumi p\ti\frac{1 + R^2}{\left(\sumj p\tj \gweight^{2}\ji + \gamma_t\right)}  = (1+R^2)Q_t.
\end{align*}
The statement of the lemma follows from putting everything together.
\end{proof}
Observe that since we assume $\gweight_{i,i} = 1$ for all $i$, $Q_t$ can be trivially bounded by $N$. As a result, it
is straightforward to show that the regret of \expwix is of order $\sqrt{TN\log N}$.
The remaining challenge is thus bounding~$Q_t$ in a nontrivial way, capturing the structure of the observation graph
$G_t$. The following lemma provides such a bound in terms of the effective independence number of
$G_t$.
\begin{lemma}
\label{lem:qtupperbound}
Let $\alpha^*_t$ be the effective independence number of $G_t$. Then, for any positive $\gamma_t$,
\[
 Q_t \le 2\alpha^*\pa{1 + \log\left(1 + \frac{N^2/\gamma_t+N^2 + N}{\alpha^*}\right)}.
\]
\end{lemma}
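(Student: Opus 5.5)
Fix the round $t$, drop the index $t$, and let $\varepsilon\in[0,1]$ attain the minimum in the definition of $\alpha^*$, so that $\alpha^*=\alpha(\varepsilon)/\varepsilon^2$. The case $\varepsilon=0$ cannot occur unless $\alpha(0)=0$, which is impossible, so $\varepsilon>0$.

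The plan is to reduce $Q_t$ to the unweighted quantity handled by the graph-theoretic lemma of \citet{alon2013from} and \citet{kocak2014efficient}, applied to the thresholded graph $G(\varepsilon)$. Dropping all arcs $j\to i$ with $s_{j,i}<\varepsilon$, and using $s_{j,i}^2\ge\varepsilon^2$ on the remaining ones, gives
\[
\sumj p_j s_{j,i}^2+\gamma \;\ge\; \varepsilon^2\sum_{j\in N^-_\varepsilon(i)} p_j+\gamma,
\]
where $N^-_\varepsilon(i)$ is the in-neighbourhood of $i$ in $G(\varepsilon)$. Self-loops are kept because $s_{i,i}=1\ge\varepsilon$. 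Writing $P_i=\sum_{j\in N^-_\varepsilon(i)}p_j$, this yields
\[
Q_t\le \frac{1}{\varepsilon^2}\sumi \frac{p_i}{P_i+\gamma/\varepsilon^2}.
\]
So it suffices to bound $\sum_i p_i/(P_i+\gamma')$ with $\gamma'=\gamma/\varepsilon^2\ge\gamma$ for the directed unweighted graph $G(\varepsilon)$, whose independence number is $\alpha(\varepsilon)$.

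For that sum I would invoke the standard discretization lemma, Lemma~1 of \citet{kocak2014efficient}, which adapts Lemma~10 of \citet{alon2013from}. Its argument goes as follows.
\begin{enumerate}
\item Set $M=\lceil N^2/\gamma'\rceil$ and round each $p_i$ up to a multiple of $1/M$, giving $\hat p_i = \lceil M p_i\rceil/M$. Because $P_i+\gamma'\ge \hat P_i - N/M+\gamma'\ge \hat P_i$, the denominators survive this replacement.
\item Replace each node $i$ by a clique of $M\hat p_i$ copies. The blown-up graph has the same independence number, and its total number of vertices is $\sum_i M\hat p_i\le M+N\le N^2/\gamma'+N+1$.
\item Apply the directed-graph bound $\sum_v 1/(1+d^-_v)\le 2\alpha\log(1+K/\alpha)$, where $K$ is the number of vertices and $d^-_v$ the in-degree of $v$. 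This bound comes from \citet{alon2013from} via a greedy/Turán-type argument on tournaments.
\item Split off a factor-$2$ slack, which is where the additive $1+{}$ inside $2\alpha^*(1+\log(\cdot))$ comes from.
\end{enumerate}
Following this argument yields
\[
\sumi\frac{p_i}{P_i+\gamma'}\le 2\alpha(\varepsilon)\Big(1+\log\big(1+\tfrac{N^2/\gamma'+N^2+N}{\alpha(\varepsilon)}\big)\Big).
\]
The extra $N^2$ term absorbs the ceilings.

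To finish, multiply by $1/\varepsilon^2$, so that $\alpha(\varepsilon)/\varepsilon^2=\alpha^*$. Inside the logarithm the term $(N^2/\gamma'+N^2+N)/\alpha(\varepsilon)$ must be compared with $(N^2/\gamma+N^2+N)/\alpha^*$. Since $\gamma'=\gamma/\varepsilon^2$, we have $N^2/(\gamma'\alpha(\varepsilon))=N^2/(\gamma\alpha^*)$ exactly. The remaining terms satisfy $(N^2+N)/\alpha(\varepsilon)\ge (N^2+N)/\alpha^*$, so this comparison goes the wrong way. To fix it, I would rerun the discretization with a cruder rounding granularity, choosing $M=\lceil N^2\varepsilon^2/\gamma\rceil$ relative to the rescaled problem. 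Alternatively, I would use the bound $\alpha(\varepsilon)\le N$ together with monotonicity of $x\mapsto x\log(1+c/x)$ to move the $\varepsilon^2$ inside the logarithm.

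I expect this bookkeeping of constants inside the logarithm to be the main obstacle. The structural reduction, thresholding followed by the unweighted lemma, is straightforward. If the exact form of the stated bound resists, the fallback is to prove the slightly weaker form with $\alpha(\varepsilon)$ in the log denominator. That form is still $\tOO(\alpha^*)$ and suffices for Theorem~\ref{thm:mainTheorem}.
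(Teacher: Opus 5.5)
There is a genuine gap, and it is exactly where you suspected: you never derive the stated bound, and neither repair you suggest works. Choosing $M=\lceil N^2\varepsilon^2/\gamma\rceil$ is not a new granularity. Since $\gamma'=\gamma/\varepsilon^2$, it is literally the $M=\lceil N^2/\gamma'\rceil$ you already used, which is also the paper's choice. Monotonicity of $x\mapsto x\log(1+c/x)$ together with $\alpha(\varepsilon)\le N$ does not help either: replacing $\alpha(\varepsilon)$ by the larger $\alpha^*$ in $\alpha(\varepsilon)\log(1+c/\alpha(\varepsilon))$ still leaves a stray factor $1/\varepsilon^2$ outside. The weaker fallback would suffice for Theorem~\ref{thm:mainTheorem}, but it is not the lemma.

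The missing ingredient is that $1/\varepsilon^2\le N$ at the minimizer. This follows from $\alpha(\varepsilon)\ge1$ and $\alpha^*\le\alpha(1)\le N$, since $1/\varepsilon^2\le\alpha(\varepsilon)/\varepsilon^2=\alpha^*\le N$. Your bookkeeping goes wrong in step~4, where you inflate the vertex count $N+1$ to $N^2+N$ \emph{before} rescaling; that $N^2$ has nothing to do with the ceilings. Keep what steps 1--3 actually give, namely $\sum_i p_i/(P_i+\gamma')\le 2\alpha(\varepsilon)\log\bigl(1+(N^2/\gamma'+N+1)/\alpha(\varepsilon)\bigr)$. Multiplying by $1/\varepsilon^2$ then yields
\[
Q_t\le 2\alpha^*\log\Bigl(1+\frac{N^2/\gamma+(N+1)/\varepsilon^2}{\alpha^*}\Bigr)\le 2\alpha^*\log\Bigl(1+\frac{N^2/\gamma+N^2+N}{\alpha^*}\Bigr).
\]
This is the paper's final step, and it is where $N^2+N$ comes from.

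Your step~4 (``split off a factor-2 slack'') is not an argument, but you do not need it. Because $\gamma'\ge N/M$, you drop $\gamma'-N/M\ge0$ from the denominator directly, so your rounding produces no additive error. The display above is therefore even stronger than the lemma. The paper instead applies Lemma~12 of \citet{alon2013from}, which costs an additive $2/N$ per node, i.e.\ $2/\varepsilon^2$ in total after rescaling. It bounds this by $2\alpha^*$ using $\alpha(\varepsilon)\ge1$, and that is the origin of the ``$1+$'' in the statement. Your rounding step is a genuine simplification of the paper's; only the final rescaling is missing.
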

\def\argmin{\mathop{\rm arg\, min}}
The proof of this statement builds on results by \citet{alon2013from} and \citet{kocak2014efficient}. Below, we give a
short sketch of the full proof that is given in the Appendix.
\begin{proof}[Proof sketch]
 Let us define $\varepsilon_* = \argmin_{\varepsilon\in[0,1]} \alpha(\varepsilon)/\varepsilon^2$ and observe that
 \[
  \frac{p\ti}{\sum_{j=1}^N p\tj\gweight^2\ji + \gamma_t} \leq 
\frac{1}{\varepsilon^2}\cdot \frac{p\ti}{p\ti + \sum_{j\not =
i}p\tj\II{\gweight\ji\geq\varepsilon} + \gamma_t}
 \]
 holds for all $\varepsilon\in[0,1]$, and in particular for $\varepsilon_*$. Applying a variant
of Lemma~1 in \citet{kocak2014efficient} to the binary graph $G_t(\varepsilon_*)$, we obtain
\[
Q_t\leq \frac{2}{\varepsilon_*^2}\cdot\left(\alpha_t(\varepsilon_*)\log\left(1 + \frac{\varepsilon_*^2
N^2/\gamma_t+N + 1}{\alpha_t(\varepsilon_*)}\right) + \frac{2}{\varepsilon_*^2}\right).
\]
The statement of the lemma follows from using the trivial bound $\alpha_t(\varepsilon_*)\ge 1$.
\end{proof}
Now, every ingredient is ready for proving Theorem~\ref{thm:mainTheorem}. In particular, 
plugging in the choice of the parameters~$\eta_t$ and $\gamma_t$ into the bound of Lemma~\ref{lem:regretbound} and
applying Lemma~3.5 of \citet{auer2002adaptive}, we obtain
 \[
R_T   \le 2\sqrt{2(1+R+R^2)\left(N+\sum_{t = 1}^{T}Q_t\right) \log N}.
 \]
Then, the statement of the theorem follows from combining the above with the
bound of Lemma~\ref{lem:qtupperbound}.

\begin{figure*}[t]
	\centering
	\includegraphics[width = .3\textwidth]{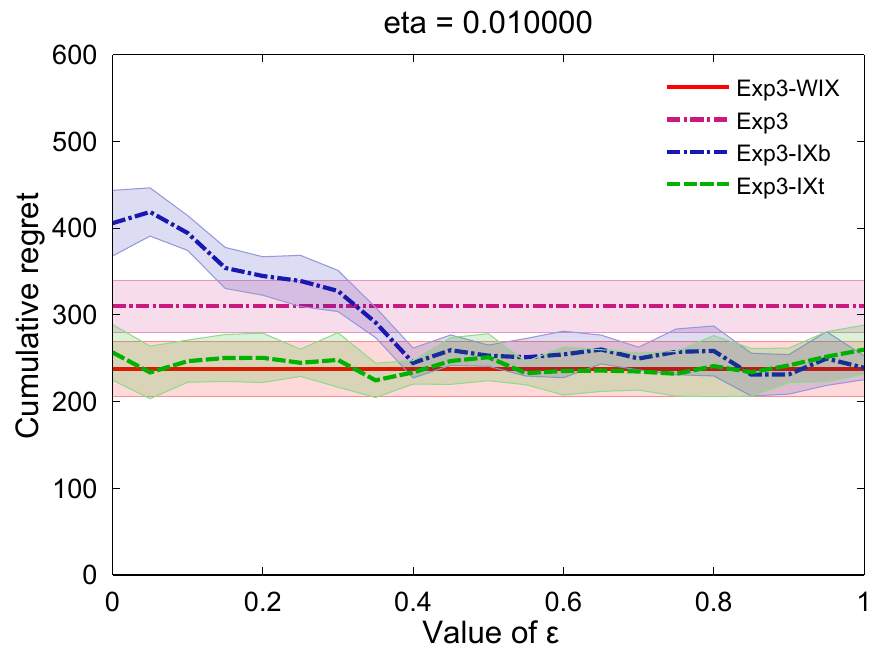}
	\includegraphics[width = .3\textwidth]{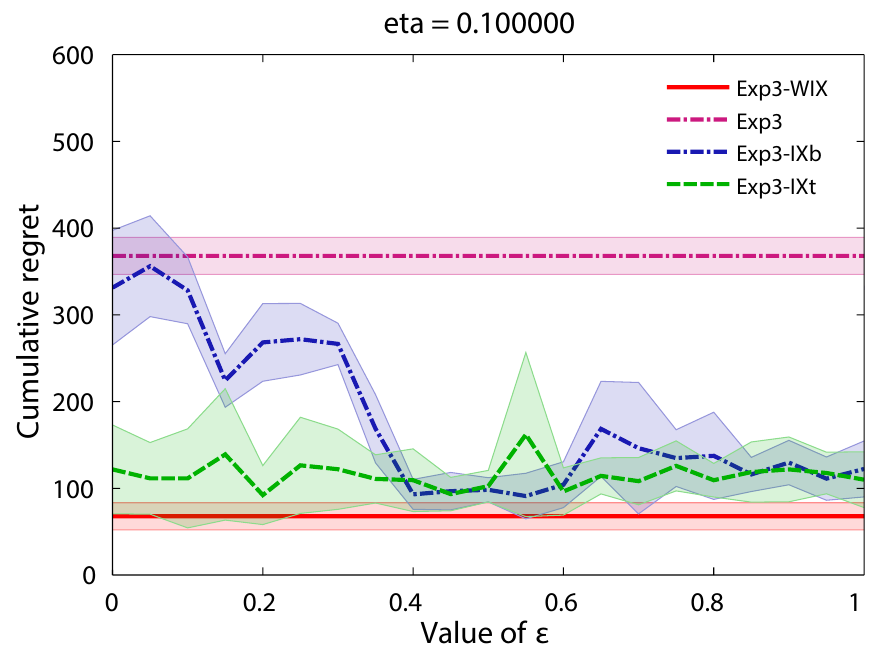}
	\includegraphics[width = .3\textwidth]{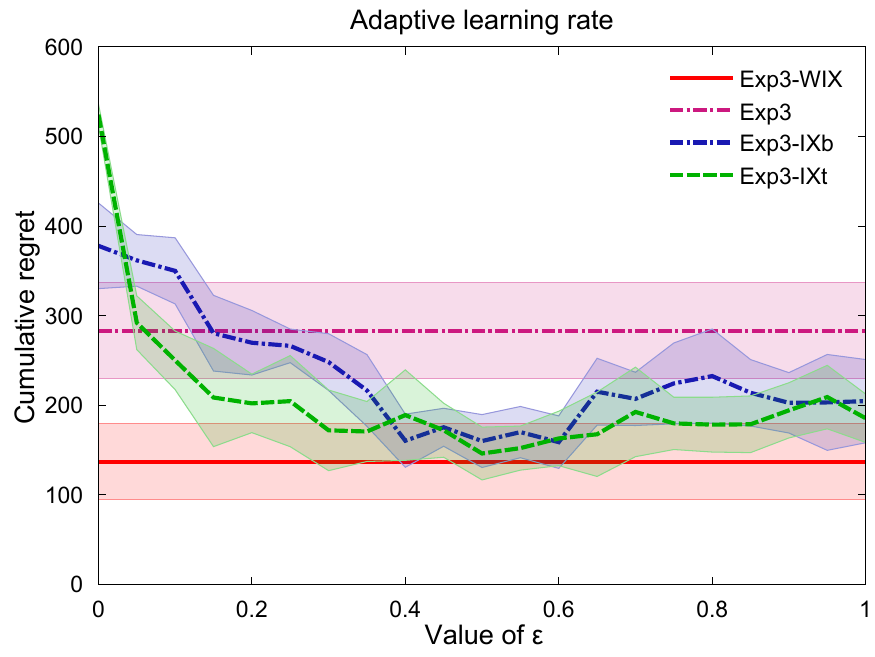}
	\caption{Comparison  of total regrets of the algorithms at time $T$ for static and adaptive learning rates.}
	\label{fig:comparison}
\end{figure*}
\section{Experiments}
In this section, we empirically compare \expwix{} to some of its natural competitors: \expixt, vanilla \exph{} that
ignores all side observations and a straightforward variation of the \exphix{} algorithm of \citet{kocak2014efficient}.
This latter algorithm, referred to as \expixb{} (with ``{\sc b}'' standing for ``basic''), uses a threshold
$\varepsilon$ to decide which observations are too noisy to use and which are the ones to be retained: All the edges
with weights smaller than a parameter $\varepsilon$ are deleted and the rest of the weights are set to $1$. The
algorithm then plays basic \exphix for the resulting binary graph. That is, the difference between \expixt and 
\expixb is that the latter does not adjust for the bias arising from using unreliable side observations.
Note that \expixb comes without any formal performance guarantee.

For the purpose of the experiments, we  assumed to have 25 actions forming $5\times5$ grid  embedded in a plane. The 
distance of neighbors in the grid was set to be 1. Using this structure, we defined the weight connecting two nodes as 
$\min\ev{3/d^2,1}$, and $d$ is the Euclidean distance between actions in the grid. 

For constructing the loss sequence, we interleaved 20 Gaussian random walks with small increments for each action, with appropriate 
truncations to keep the losses within the $[0,1]$ interval.
 Using this procedure, we generated a single loss sequence of $T=5,000$ steps to test the algorithms. 
For a fair comparison, we ran each algorithm for their respective theoretically motivated adaptive 
learning rates, and also for a number of static learning rates. For static learning rates, we observed the best performance of \exph for 
learning rates 
around $0.01$, all the other algorithms did well for learning rates around $0.1$. Due to the lack of space, we included 
plots only for these two learning rates.

We ran \expixb{} and \expixt{} for several values of $\varepsilon$ from $0$ to 
$1$. In all experiments, we set the implicit exploration parameters to zero. This is well-justified in the case of 
undirected graphs, as shown by the analysis of \citet{alon2013from}.
Figure~\ref{fig:comparison} shows the performance of the algorithms as a function of the threshold parameter 
$\varepsilon$.
Each curve on this graph is the average of the total regrets measured in 10 independent runs with error bars 
proportional to the empirical standard deviation. 

Our experiments confirm that guessing the right value for the threshold parameter is indeed a very difficult problem: 
while \expwix{} performs consistently well for all parameter settings, \expixt{} and \expixb{} only perform reasonably 
well for moderate values of~$\varepsilon$ that are not supported by theory. (In fact, the graph is designed so that the value of 
$\varepsilon$ optimizing $\alpha(\varepsilon)/\varepsilon^2$ is~1, which suggests that only observations from immediate neighbors
in the grid are relevant.) Perhaps 
surprisingly, \expixb performs well despite the obvious bias in its loss estimates. The performance of \exph is 
significantly worse than \expwix, confirming the benefit of side-observations, however noisy they are.

\vspace{-.25cm}
\section{Conclusions and open problems}
\vspace{-.25cm}
The main contribution of our work is introducing a new partial-observability model for adversarial online learning and 
proposing an efficient learning algorithm with rigorous performance guarantees for this setting. Our regret bounds 
depend on a newly introduced graph property that we call the effective independence number. While the recent results of 
\citet{wu2015online} suggest that our bounds are minimax optimal in some special cases of our framework, it is not yet known 
whether the effective independence number is the exact quantity that characterizes the minimax regret in general---we 
leave this exciting question open for future investigation. 

\paragraph{Acknowledgements}
\label{sec:Acknowledgements}
The research presented in this paper was supported by French Ministry of
Higher Education and Research, Nord-Pas-de-Calais Regional Council,  French National Research Agency project ExTra-Learn (n.ANR-14-CE24-0010-01), 
and by UPFellows Fellowship (Marie Curie COFUND program n${^\circ}$ 600387).

\newpage
{\fontsize{10}{11.75}\selectfont
\bibliography{library,library2,allbib}
\bibliographystyle{icml2015}
}

\appendix
\onecolumn
\section{Proof of Lemma \ref{lem:regretbound}}
The first part of the analysis is similar to the analysis of the basic {\sc Exp3} algorithm besides, we are using an adaptive learning rate $\eta_t$ to obtain anytime regret bound, and therefore, we do not need to know the stopping time $T$ of the algorithm. We start by introducing some notation. Let
\[
W_{t} = \frac 1N\sumi e^{-\eta_{t}\hL_{t-1,i}} \qquad\mbox{and}\qquad
W'_{t} = \frac 1N\sumi e^{-\eta_{t-1}\hL_{t-1,i}}.
\]
Following the proof of Lemma~1 of \cite{gyorfi2007sequential}, we track the evolution of $\log (W_{t+1}'/W_t)$ to control the regret. We have
\begin{align}
\nonumber\frac{1}{\eta_t}\log\frac{W'_{t+1}}{W_t} &=\frac{1}{\eta_t}\log
\sumi\frac{\frac 1Ne^{-\eta_{t}\hL_{t,i}}}{W_t}
=\frac{1}{\eta_t}\log\sumi\frac{\frac 1Ne^{-\eta_t\hL_{t-1,i}} e^{-\eta_t\hloss\ti}}{W_t}
			\\
\label{eq:tailor}&=\frac{1}{\eta_t}\log\sumi p\ti e^{-\eta_t\hloss\ti}	\leq
\frac{1}{\eta_t}\log\sumi p\ti\left(1-\eta_t\hloss\ti+
(\eta_t\hloss\ti)^2\right)	\\
\nonumber&=\frac{1}{\eta_t}\log\left(1-\eta_t\sumi p\ti\hloss\ti
+\eta_t^2\sumi p\ti(\hloss\ti)^2\right),
\end{align}
where in (\ref{eq:tailor}), we used the inequality $\exp(-x) \le 1 - x + x^2$ that holds for 
$x\ge -1$. The use of this inequality is made possible by the definitions of $\eta_t$ and $\gamma_t$ that guarantee
$\eta_t\hloss\ti\ge-1$ for all $i$. Further, we 
use the inequality $\log(1-x)\leq-x$, which holds for all $x$, to upper bound last term
\begin{align}
\sumi p\ti \hloss\ti&\leq\left[\frac{\log
\nonumber W_t}{\eta_t} -\frac{\log W'_{t+1}}{\eta_t}\right]+\sumi\eta_t p\ti(\hloss\ti)^2	\\
\label{eq:eq2}&=\left[\left(\frac{\log
W_t}{\eta_t} 
-\frac{\log W_{t+1}}{\eta_{t+1}}\right)+\left(\frac{\log W_{t+1}}{\eta_{t+1}}
-\frac{\log 
W'_{t+1}}{\eta_t}\right)\right]+\sumi\eta_t p\ti(\hloss\ti)^2.	
\end{align}

The second term in brackets on the right hand side is upper bounded by zero, since
$$W_{t+1} = \sumi\frac{1}{N}e^{-\eta_{t+1}\hL_{t,i}} =
\sumi\frac{1}{N}\left(e^{-\eta_{t}\hL_{t,i}}\right)^{\frac{\eta_{t+1}}{
\eta_t}}\leq\left(\sumi\frac{1}{N}e^{
-\eta_{t}\hL_{t,i}}\right)^{\frac{\eta_{t+1}}{\eta_t}} =
(W'_{t+1})^{\frac{\eta_{t+1}}{\eta_t}}.$$
Using Jensen's inequality to the concave function
$x^{\frac{\eta_{t+1}}{\eta_t}}$ for $x\in \R$. The function is
concave since $\eta_{t+1}\leq\eta_t$ by definition. Taking logarithms in the
above inequality, we get

$$\frac{\log W_{t+1}}{\eta_{t+1}}-\frac{\log W'_{t+1}}{\eta_t}\leq 0.$$

Using this inequality, we can simplify (\ref{eq:eq2})
\[
\sumi p\ti\hloss\ti\leq\eta_t\sumi p\ti\left(\hloss\ti\right)^2+\left(\frac{\log W_t}{\eta_t}-\frac{\log
W_{t+1}}{\eta_{t+1}}\right).
\]
Taking \emph{conditional} expectations with respect to the $\sigma$-algebra $\F_{t-1}$, generated by the history up to time $t-1$, and summing up both sides over the time,
we get
\begin{align}\label{eq:genbound}
\sumT\EEcc{\sumi p\ti\hloss\ti}{\!\F_{t-1}\!}\!\leq\sumT\EEcc{\eta_t
\sumi p\ti\left(\hloss\ti\right)^2}{\!\F_{t-1}\!}
\!+\!\sumT\EEcc{\frac{\log W_t}{\eta_t}\!-\!\frac{\log
W_{t+1}}{\eta_{t+1}}}{\!\F_{t-1}\!}\!\!.
\end{align}

For the following part of the analysis, we use a slightly more general form of our loss estimates,
$$\hloss\ti = \sumi p\ti\frac{\gweight^\delta\Ii\oloss\ti}{\sumj p\tj \gweight^{1+\delta}\ji + \gamma_t} = \sumi
p\ti\frac{\gweight^{1+\delta}\Ii\ell\ti + \gweight\Ii^\delta(1-\gweight\Ii)\noise\ti}{\sumj p\tj \gweight^{1+\delta}\ji
+ \gamma_t}.$$
Later we show that $\delta = 1$ is optimal, which recovers the loss estimates \eqref{eq:est}.
The next step is to bound the three expectations involved in Equation~\eqref{eq:genbound}. For the first expectation we
have 
\begin{align*}
\EEcc{\sumi p\ti\hloss\ti}{\!\F_{t-1}\!} &= \EEcc{\sumi p\ti\frac{\gweight^\delta\Ii\oloss\ti}{\sumj p\tj \gweight^{1+\delta}\ji + \gamma_t}}{\!\F_{t-1}\!}= \sumi p\ti\frac{\sumj p\tj \gweight^{1+\delta}\ji\ell\ti}{\sumj p\tj \gweight^{1+\delta}\ji + \gamma_t}	\\
& \geq \sumi p\ti\ell\ti - \gamma_t\sumi\frac{p\ti}{\sumj p\tj \gweight^{1+\delta}\ji + \gamma_t} = \sumi p\ti\ell\ti - \gamma_t Q_t(\delta),
\end{align*}
where $$Q_t(\delta) = \sumi\frac{p\ti}{\sumj p\tj \gweight^{1+\delta}\ji + \gamma_t}\cdot$$
For the second expectation we have
\begin{align*}
\EEcc{\sumi p\ti(\hloss\ti)^2}{\!\F_{t-1}\!} &= \sumi p\ti\frac{\EEcc{ \gweight^{2+2\delta}\Ii }{\!\F_{t-1}\!}\ell^2\ti + \EEcc{\gweight\Ii^{2\delta}(1-\gweight\Ii)^2}{\!\F_{t-1}\!}\EEcc{\noise\ti^2}{\!\F_{t-1}\!}}{\left(\sumj p\tj \gweight^{1+\delta}\ji + \gamma_t\right)^2} \\
&\leq \sumi p\ti\frac{\sumj p\tj \gweight^{2+2\delta}\ji + \sumj p\tj \gweight^{2\delta}\ji R^2}{\left(\sumj p\tj \gweight^{1+\delta}\ji + \gamma_t\right)^2} \\
&\leq \sumi p\ti\frac{1 + R^2}{\left(\sumj p\tj \gweight^{1+\delta}\ji + \gamma_t\right)}  = (1+R^2)Q_t(\delta).
\end{align*}
Where the last inequality holds for $\delta\geq1$. 
For the third expectation we have
\begin{align*}
-\EE{\frac{\log W_{T+1}}{\eta_{T+1}}} \leq \min_{k\in [N]}\left(-\EE{\frac{\log
\frac{1}{N}e^{-\eta_T\hL_{T,k}}}{\eta_T}}\right)  = \EE{\frac{\log N}{\eta_T}} +
\min_{k\in[N]}\left(\EE{\hL_{T,k}}\right).
\end{align*}

To conclude, observe that $Q_t(1)\le Q_t(d)$ holds almost surely and thus we can set $\delta = 1$. Then the statement
of the lemma follows from combining all of the bounds above.
\qed

\section{Proof of Lemma \ref{lem:qtupperbound}}
 As done in the analysis of \citet{alon2013from,kocak2014efficient}, we use following two lemmas to bound $Q_t$.
\begin{lemma}{\rm (cf.~Lemma 10 of \cite{alon2013from})}\label{lem:indBoundForGraph}
~Let $G$ be a directed graph, with $V = \{1,\dots,N\}$. Let $d_i^-$ 
be
the indegree of the node $i$ and $\alpha =
\alpha(G)$ be the independence number of $G$. Then
$$\sum_{i=1}^N\frac{1}{1+d_i^-}\leq2\alpha\log\left(1+\frac{N}{\alpha}
\right).$$
\end{lemma}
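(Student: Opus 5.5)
The plan is to prove the bound by induction on the number of vertices. At each step I remove one well-chosen vertex, namely a vertex of maximum indegree, and I control its contribution using Turán's theorem applied to the undirected graph underlying $G$.

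\textbf{Monotonicity under deletion.} Deleting a vertex from a directed graph can only decrease the indegrees of the remaining vertices. So if $i^*$ is any vertex and $G' = G - i^*$, then for every $i\neq i^*$ we have $d_i^-(G)\ge d_i^-(G')$. Hence
\[
\sum_{i\in V}\frac{1}{1+d_i^-(G)} \le \frac{1}{1+d_{i^*}^-(G)} + \sum_{i\in V\setminus\{i^*\}}\frac{1}{1+d_i^-(G')}.
\]
Also, an independent set of the induced subgraph $G'$ is independent in $G$, so $\alpha(G')\le\alpha$.

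\textbf{Large maximum indegree via Turán.} Let $n$ be the current number of vertices and $H$ the undirected simple graph in which $\{i,j\}$ is an edge iff $i\ra j$ or $j\ra i$ is an arc. Then $H$ has the same independence number $\alpha(G)\le\alpha$. Turán's theorem (in the form $\alpha(H)\ge n/(1+\bar d)$, where $\bar d$ is the average degree) gives $\bar d\ge n/\alpha-1$, so $H$ has at least $\tfrac{n}{2}(n/\alpha-1)$ edges. Each edge of $H$ comes from at least one arc of $G$, so the number of arcs $m$ satisfies $m\ge \tfrac n2(n/\alpha-1)$. The indegrees sum to $m$, so the maximum indegree is at least $m/n\ge (n/\alpha-1)/2$. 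Choosing $i^*$ to be a vertex of maximum indegree therefore gives
\[
\frac{1}{1+d^-_{i^*}}\le \frac{2\alpha}{n+\alpha}.
\]

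\textbf{Iterate and sum.} I apply the two steps repeatedly, first with $n=N$, then $N-1$, and so on down to $1$. Each induced subgraph still has independence number at most $\alpha$, so the same $\alpha$ can be used throughout. This yields
\[
\sum_{i=1}^N\frac{1}{1+d_i^-}\le\sum_{k=1}^{N}\frac{2\alpha}{k+\alpha}\le 2\alpha\int_0^N\frac{dx}{x+\alpha}=2\alpha\log\left(1+\frac N\alpha\right),
\]
where the integral comparison holds because $x\mapsto 1/(x+\alpha)$ is decreasing.

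\textbf{Main obstacle.} The only delicate point is the Turán step for directed graphs. I must pass to the undirected graph $H$ and note that it has independence number exactly $\alpha(G)$. Arcs in both directions between the same pair only help, and self-loops (if $s_{i,i}$ is counted) are irrelevant for the lower bound on arcs. The rest is bookkeeping: the monotonicity of the summands under vertex deletion, and the fact that the bound is used with a fixed $\alpha$ rather than the shrinking independence numbers of the subgraphs.
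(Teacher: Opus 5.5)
Your proof is correct and takes essentially the same route as the proof of Lemma~10 in \citet{alon2013from}, which the paper cites rather than reproduces. You greedily delete a maximum-indegree vertex, use Tur\'an's theorem on the underlying undirected graph to bound its term by $2\alpha/(n+\alpha)$, note that indegrees can only drop under deletion, and compare $\sum_{k=1}^N 2\alpha/(k+\alpha)$ with $2\alpha\int_0^N dx/(x+\alpha)$. The only cosmetic difference is that you feed the fixed $\alpha$ directly into Tur\'an's bound, whereas Alon et al.\ track the subgraph independence numbers $\alpha_k\le\alpha$ and use that $2\alpha_k/(\alpha_k+k)$ is increasing in $\alpha_k$.
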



\begin{lemma}{\rm (cf.~Lemma 12 of \cite{alon2013from})}\label{lem:techLemma}
~If $a,\,b\geq 0$ and $a+b\geq B>A>0$, then
$$\frac{a}{a+b-A}\leq\frac{a}{a+b}+\frac{A}{B-A}\cdot$$
\end{lemma}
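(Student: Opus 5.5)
The plan is a direct algebraic comparison: I will bound the difference between the two fractions rather than the fractions separately. All denominators are positive, since $a+b\ge B>A>0$ gives $a+b-A\ge B-A>0$ and $a+b>0$.

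First I would combine the two fractions over a common denominator:
\[
\frac{a}{a+b-A}-\frac{a}{a+b}=\frac{a\bigl((a+b)-(a+b-A)\bigr)}{(a+b)(a+b-A)}=\frac{aA}{(a+b)(a+b-A)}.
\]

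Second, I would bound this difference in two monotonicity steps.
\begin{itemize}
\item Because $b\ge 0$, we have $0\le a\le a+b$, so $\frac{a}{a+b}\le 1$. This gives $\frac{aA}{(a+b)(a+b-A)}\le \frac{A}{a+b-A}$.
\item Because $a+b-A\ge B-A>0$ and $A>0$, the reciprocal is decreasing. This gives $\frac{A}{a+b-A}\le\frac{A}{B-A}$.
\end{itemize}
Rearranging the resulting inequality gives the statement of Lemma~\ref{lem:techLemma}.

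There is no real obstacle here. The only point needing care is the sign bookkeeping: checking that $a+b-A$ is strictly positive, which comes from $B>A$, so that dividing and inverting are legitimate. Checking that $a/(a+b)\le 1$ uses only $b\ge 0$. The degenerate case $a=0$ is harmless, since both sides are then nonnegative and the left-hand side is $0$.
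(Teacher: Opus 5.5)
Your proof is correct and complete: the identity $\frac{a}{a+b-A}-\frac{a}{a+b}=\frac{aA}{(a+b)(a+b-A)}$, followed by $a/(a+b)\le 1$ and $a+b-A\ge B-A>0$, gives the claimed bound. The paper gives no proof of its own and defers to Lemma~12 of \citet{alon2013from}, and your two-step argument is the standard one for this inequality.
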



Before using the previous lemmas, we need to discretize the values of $p\ti$. Let $\hat p\ti$ be the discretized
version of $p\ti$ which satisfies $\hp\ti = k/M$ for some integer $k$, $M = \lceil \varepsilon^2 N^2/\gamma_t\rceil$,
and $\hp\ti -1 < p\ti \leq \hp\ti$. 
\begin{center}
\includegraphics[width = 4in]{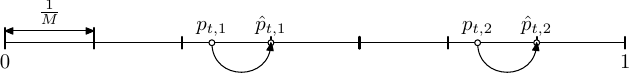}
\end{center}
Using Lemma \ref{lem:techLemma} for $a = \hp\ti$, $b = \sum_{j\not = i} \hp\tj\mathbb{I}\{\gweight\ji\geq\varepsilon\} + \gamma/\varepsilon$, $B = \gamma/\varepsilon$, and $A = N/M$ we get
\begin{align*}
\frac{p\ti}{p\ti + \sum_{j\not = i}p\tj\gweight^2\ji + \gamma_t} &\leq \frac{p\ti}{\varepsilon^2 p\ti + \sum_{j\not = i}\varepsilon^2 p\tj\mathbb{I}\{\gweight\ji\geq\varepsilon\} + \gamma_t}\\
&= \frac{1}{\varepsilon^2}\frac{p\ti}{p\ti + \sum_{j\not = i}p\tj\mathbb{I}\{\gweight\ji\geq\varepsilon\} + \gamma_t/\varepsilon^2}\\
&\leq \frac{1}{\varepsilon^2}\frac{\hp\ti}{\hp\ti + \sum_{j\not = i}\hp\tj\mathbb{I}\{\gweight\ji\geq\varepsilon\} + \gamma_t/\varepsilon^2-N/M}\\
&\leq \frac{1}{\varepsilon^2}\left(\frac{\hp\ti}{\hp\ti + \sum_{j\not = i}\hp\tj\mathbb{I}\{\gweight\ji\geq\varepsilon\}} + \frac{N/M}{\gamma_t/\varepsilon^2 - N/M}\right)\\
&\leq \frac{1}{\varepsilon^2}\left(\frac{\hp\ti}{\hp\ti + \sum_{j\not = i}\hp\tj\mathbb{I}\{\gweight\ji\geq\varepsilon\}} + \frac{2}{N}\right).\\
\end{align*}
From this point, one can follow the proof of Lemma~1 by \citet{kocak2014efficient} to prove
\begin{align*}
Q_t &\leq\frac{1}{\varepsilon^2}\sumi\frac{\hp\ti}{\hp\ti + \sum_{j\not =
i}\hp\tj\mathbb{I}\{\gweight\ji\geq\varepsilon\}} +
\frac{2}{\varepsilon^2}\leq\frac{2}{\varepsilon^2}\alpha_t(\varepsilon)\log\left(1 +
\frac{M+N}{\alpha_t(\varepsilon)}\right) + \frac{2}{\varepsilon^2}\\
&\leq\frac{2}{\varepsilon^2}\alpha_t(\varepsilon)\log\left(1 + \frac{ N^2/\gamma_t+N/\varepsilon^2 +
1/\varepsilon^2}{\alpha_t(\varepsilon)/\varepsilon^2}\right) + \frac{2}{\varepsilon^2}.
\end{align*}
This bound holds for every $\varepsilon\in [0,1]$, therefore, it holds also for $\varepsilon_*$. Finally, using
$1/\varepsilon^2\leq N$ and $\alpha(\varepsilon)\geq 1$, we can recover the statement of the lemma. \qed

\section{The proof of Theorem~\ref{thm:naive}}
The proof of Theorem~\ref{thm:naive} roughly follows the proof of Theorem~\ref{thm:mainTheorem} with one key 
difference. For simplicity, let us define 
\[
 Q'_t = \sumi\frac{p\ti}{\sumj p\tj \gweight\ji\II{\gweight\ji\ge\varepsilon_t} + \gamma_t}
\]
and consider an oblivious adversary that chooses the whole sequence of observations graphs deterministically before the 
first round.
Our starting point is the bound of Equation~\eqref{eq:genbound}, which also 
holds for \expixt. First, we have
\begin{align*}
\EEcc{\sumi p\ti\hloss\ti}{\!\F_{t-1}\!} &= \EEcc{\sumi p\ti\frac{\oloss\ti \II{\gweight\Ii\ge\varepsilon_t}} 
{\sumj p\tj \gweight\ji\II{\gweight\ji\ge\varepsilon_t} + \gamma_t}}{\!\F_{t-1}\!}= \sumi p\ti\frac{\sumj p\tj 
\gweight\ji\II{\gweight\ji\ge\varepsilon_t}\ell\ti}{\sumj p\tj \gweight\ji\II{\gweight\ji\ge\varepsilon_t} + \gamma_t}	
\\
& \geq \sumi p\ti\ell\ti - \gamma_t\sumi\frac{p\ti}{\sumj p\tj \gweight\ji\II{\gweight\ji\ge\varepsilon_t} + \gamma_t} 
= \sumi p\ti\ell\ti - \gamma_t Q_t'.
\end{align*}
Furthermore, 
\begin{align*}
\EEcc{\sumi p\ti(\hloss\ti)^2}{\!\F_{t-1}\!} &= \sumi p\ti\frac{\EEcc{ \gweight^2\Ii 
\II{\gweight\Ii\ge\varepsilon_t} }{\!\F_{t-1}\!}\ell^2\ti + 
\EEcc{\pa{1-\gweight\Ii\II{\gweight\Ii\ge\varepsilon_t}}^2}{\!\F_{t-1}\!}\EEcc{\noise\ti^2}{\!\F_{t-1}\!}}{\left(\sumj 
p\tj \gweight\ji\II{\gweight\ji\ge\varepsilon_t} + \gamma_t\right)^2} \\
&\leq \sumi p\ti\frac{\sumj p\tj \gweight^2\ji + R^2}{\left(\sumj p\tj 
\gweight\ji\II{\gweight\ji\ge\varepsilon_t} + \gamma_t\right)^2} 
\leq \frac{1}{\varepsilon_t} \sumi p\ti\frac{1 + R^2}{\sumj p\tj 
\gweight\ji\II{\gweight\ji\ge\varepsilon_t} + \gamma_t} 
\\
&= \frac{(1+R^2)}{\varepsilon_t} Q_t',
\end{align*}
where the last inequality uses that $\sumj p\tj \gweight\ji\II{\gweight\ji\ge\varepsilon_t} + \gamma_t \ge 
\varepsilon_t$. 
Now, following the proof of Lemma~\ref{lem:qtupperbound}, we can prove 
\[
Q_t' \le 2\frac{\alpha(G_t(\varepsilon_t))}{\varepsilon_t}\pa{1 + \log\left(1 + \frac{N^2/\gamma_t+N + 
1}{\alpha}\right)}.
\]

For finishing the proof, let us set $\eta_t=\eta\ge 0$ and $\gamma_t=\gamma\ge 0$ for all $t$.
Putting all of the above results together, we get
\[
\begin{split}
 R_T &\le \frac{\log N}{\eta} + \gamma \sum_{t=1}^T Q'_t + \eta (1+R^2)\sum_{t=1}^T \frac{Q'_t}{\varepsilon_t}
 \\
 &\le \frac{\log N}{\eta} + \gamma C_1 \sum_{t=1}^T \frac{\alpha(G_t(\varepsilon_t))}{\varepsilon_t} + \eta C_2
(1+R^2)\sum_{t=1}^T \frac{\alpha(G_t(\varepsilon_t))}{\varepsilon_t^2},
\end{split}
\]
where $C_1$ and $C_2$ are $\OO\pa{\log \pa{N/\gamma}}$. Optimizing the choice of $\eta$ and $\gamma$ concludes the 
proof of Theorem~\ref{thm:naive}. \qed

\end{document}